\def\I{\ensuremath{\mathds{1}}}
\def\R{\ensuremath{\mathbb{R}}}
\def\C{\ensuremath{\mathcal{C}}}
\def\K{\ensuremath{\mathcal{K}}}
\def\H{\ensuremath{\mathcal{H}}}
\def\X{\ensuremath{\mathcal{X}}}
\def\one{\ensuremath{\mathbf{1}}}
\def\trans{\ensuremath{^\mathsf{T}}}
\def\th{\ensuremath{^\text{th}}}
\DeclareMathOperator{\trace}{tr}
\DeclareMathOperator{\diag}{diag}
\DeclareMathOperator{\diam}{diam}
\def\ie{\emph{i.e.}}
\def\eg{\emph{e.g.}}
\newtheorem{defn}{Definition}[section]
\begin{document}

\author{\name Brian McFee \email bmcfee@cs.ucsd.edu\\
\addr Department of Computer Science and Engineering\\
University of California\\
San Diego, CA 92093-0404, USA
\AND
\name Gert Lanckriet \email gert@ece.ucsd.edu\\
\addr Department of Electrical and Computer Engineering\\
University of California\\
San Diego, CA 92093-0407, USA
}
\title{Learning Multi-modal Similarity}

\editor{}
\maketitle

\begin{abstract}
In many applications involving multi-media data, the definition of similarity between items is integral to several key tasks, \eg, nearest-neighbor
retrieval, classification, and recommendation.  Data in such regimes typically exhibits multiple modalities, such as acoustic and visual content of video.
Integrating such heterogeneous data to form a holistic similarity space is therefore a key challenge to be overcome in many real-world applications.

We present a novel multiple kernel learning technique for integrating heterogeneous data into a single, unified similarity space.  Our algorithm learns an
optimal ensemble of kernel transformations which conform to measurements of human perceptual similarity, as expressed by relative comparisons.  To cope with
the ubiquitous problems of subjectivity and inconsistency in multi-media similarity, we develop graph-based techniques to filter similarity measurements,
resulting in a simplified and robust training procedure.  

\end{abstract}

\section{Introduction}


In applications such as content-based recommendation systems, the definition of a proper similarity measure between items is crucial to many tasks, 
including nearest-neighbor retrieval and classification.  In some cases, a natural notion of similarity may emerge from domain knowledge, \eg, cosine 
similarity for bag-of-words 
models of text.  However, in more complex, multi-media domains, there is often no obvious choice of similarity measure.  Rather, viewing different aspects 
of the data may lead to several different, and apparently equally valid notions of similarity.  For example, if the corpus consists of musical
data, each song or artist may be represented simultaneously by acoustic features (such as rhythm and timbre), semantic features (tags, lyrics), or 
social features (collaborative filtering, artist reviews and biographies, etc).  Although domain knowledge may be employed to imbue each representation
with an intrinsic geometry --- and, therefore, a sense of similarity --- the different notions of similarity may not be mutually consistent.  In such 
cases, there is generally no obvious way to combine representations to form a unified similarity space which optimally integrates heterogeneous data.




Without extra information to guide the construction of a similarity measure, the situation seems hopeless.  However, if some side-information is available,
\eg, as provided by human labelers, it can be used to formulate a learning algorithm to optimize the similarity measure.

This idea of using side-information to optimize a similarity function has received a great deal of attention in recent years.  Typically, the notion of
similarity is captured by a distance metric over a vector space (\eg, Euclidean distance in $\R^d$), and the problem of optimizing similarity reduces to 
finding a suitable embedding of the data under a specific choice of the distance metric.  \emph{Metric learning} methods, as they are known in the machine 
learning literature, can be informed by various types of side-information, including class labels~\citep{xing03,nca,collapsing,lmnn}, or binary 
\emph{similar}/\emph{dissimilar} pairwise labels~\citep{wagstaff01,rca,bilenko04,globerson07,itml}.  Alternatively, multidimensional scaling (MDS) 
techniques are typically formulated in terms of quantitative (dis)similarity measurements~\citep{torgerson52,kruskal64,cox94,borg05}.  In these settings, 
the representation of data is optimized so that distance (typically Euclidean) conforms to side-information.  Once a suitable metric has been learned,
similarity to new, unseen data can be computed either directly (if the metric takes a certain parametric form, \eg, a linear projection matrix), or via 
out-of-sample extensions~\citep{bengio04}.

To guide the construction of a similarity space for multi-modal data, we adopt the idea of using similarity measurements, provided by human labelers, as side-information. However, 
it has to be noted that, especially in heterogeneous, multi-media domains, similarity may itself be a highly subjective concept and 
vary from one labeler to the next~\citep{ellis02}.  Moreover, a single labeler may not be able to consistently decide if or to what extent two 
objects are similar, but she may still be able to reliably produce a rank-ordering of similarity over pairs~\citep{kendall90}.
Thus, rather than rely on quantitative similarity or hard binary labels of pairwise similarity, it is now becoming increasingly common to collect similarity information
in the form of triadic or \emph{relative} comparisons~\citep{schultz04,gnmds}, in which human labelers answer questions of the form:
\begin{center}
``Is $x$ more similar to $y$ or $z$?''
\end{center}
Although this form of similarity measurement has been observed to be more stable than quantitative similarity~\citep{kendall90}, and clearly provides a 
richer representation than binary pairwise similarities, it is still subject to problems of consistency and inter-labeler agreement.  It is therefore 
imperative that great care be taken to ensure some sense of robustness when working with perceptual similarity measurements.

In the present work, our goal is to develop a framework for integrating multi-modal data so as to optimally conform to perceptual similarity encoded by relative comparisons. In particular, we follow three guiding principles in the development of our framework:
\begin{enumerate}
\item The embedding algorithm should be robust against subjectivity and inter-labeler disagreement.
\item The algorithm must be able to integrate multi-modal data in an optimal way, \ie, the distances between embedded points should conform to perceptual similarity measurements.
\item It must be possible to compute distances to new, unseen data as it becomes available.
\end{enumerate}


We formulate this problem of heterogeneous feature integration as a learning problem: given a data set, and a collection of relative comparisons 
between pairs, learn a representation of the data that optimally reproduces the similarity measurements.  This type of embedding problem has been previously
studied by \cite{gnmds} and~\cite{schultz04}.  However, \cite{gnmds} provide no out-of-sample extension, and neither support heterogeneous feature
integration, nor do they address the problem of noisy similarity measurements.

A common approach to optimally integrate heterogeneous data is based on \emph{multiple kernel learning}, where each kernel encodes a different modality 
of the data. Heterogeneous feature integration via multiple kernel learning has been addressed by previous authors in a variety of contexts, including
classification~\citep{lanckriet04,zien07,kloft09,nath09}, regression~\citep{sonnenburg06,bach08,cortes09}, and dimensionality reduction~\citep{lin09}.  
However, none of these methods specifically address the problem of learning a unified data representation which conforms to perceptual similarity 
measurements.\\


\subsection{Contributions}

Our contributions in this work are two-fold.  First, we develop the \emph{partial order embedding} (POE) framework~\citep{mcfee09_mkpoe}, which allows us to use graph-theoretic algorithms to filter a collection of subjective similarity measurements for consistency and redundancy.  We then formulate a novel multiple kernel 
learning (MKL) algorithm which learns an ensemble of feature space projections to produce a unified similarity space.  Our method is able to produce 
non-linear embedding functions which generalize to unseen, out-of-sample data.  Figure~\ref{fig:overview} provides a high-level overview of the proposed 
methods.

\begin{figure}
\begin{center}
\includegraphics[width=0.65\textwidth]{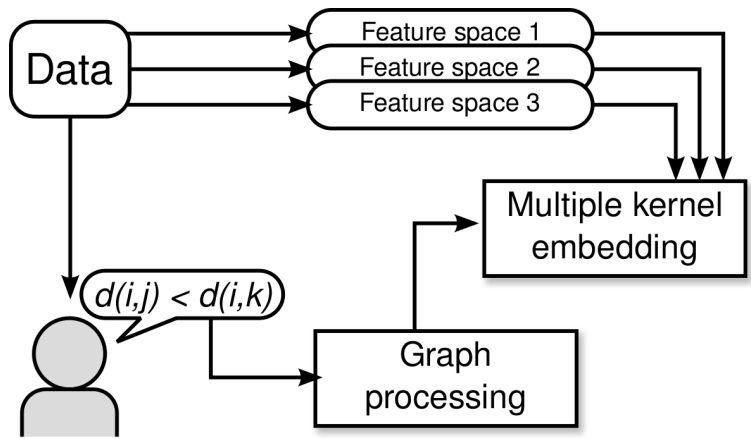}%
\end{center}
\caption{An overview of our proposed framework for multi-modal feature integration.  Data is represented in multiple feature spaces (each encoded by a
kernel function).  Humans supply perceptual similarity measurements in the form of relative pairwise comparisons, which are in turn filtered by graph 
processing algorithms, and then used as constraints to optimize the multiple kernel embedding.\label{fig:overview}}
\end{figure}

The remainder of this paper is structured as follows.  In Section~\ref{sec:poe}, we develop a graphical framework for interpreting and manipulating 
subjective similarity measurements.  
In Section~\ref{sec:parametric}, we derive an embedding algorithm which learns an optimal transformation of a single feature space.
In Section~\ref{sec:mkl}, we 
develop a novel multiple-kernel learning formulation for embedding problems, and derive 
an algorithm to learn an optimal space from heterogeneous data.
Section~\ref{sec:exp} provides experimental results illustrating the effects of graph-processing on noisy similarity data, and the effectiveness of 
the multiple-kernel embedding algorithm on a music similarity task with human perception measurements.
Finally, we prove hardness of dimensionality reduction in this setting in Section~\ref{sec:nphard}, and conclude in
Section~\ref{sec:conclusion}.

\subsection{Preliminaries}


A \emph{(strict) partial order} is a binary relation $R$ over a set $Z$ ($R{\subseteq}Z^2$) which satisfies the following properties:\footnote{The standard definition
of a (non-strict) partial order also includes the \emph{reflexive} property: $\forall a, (a,a) \in \R$.  For reasons that will become clear in 
Section~\ref{sec:poe}, we take the \emph{strict} definition here, and omit the reflexive property.}
\begin{itemize}
\item Irreflexivity: $(a,a) \notin R$,
\item Transitivity: $(a,b) \in R \wedge (b,c) \in R \Rightarrow (a,c) \in R$,
\item Anti-symmetry: $(a,b) \in R \Rightarrow (b,a) \notin R$.
\end{itemize}

Every partial order can be equivalently represented as a directed acyclic graph (DAG), where each vertex is an element of $Z$ and an edge is drawn from 
$a$ to $b$ if $(a,b)\in R$.  
For any partial order, $R$ may refer to either the set of ordered tuples $\{(a,b)\}$ or the graph (DAG) representation of the partial order; the use will be
clear from context.
Let $\diam(R)$ denote the length of the longest (finite) source-to-sink path in the graph of $R$.

For a directed graph $G$, we denote by $G^\infty$ its \emph{transitive closure}, \ie, $G^\infty$ contains an edge $(i,j)$ if and only if there exists a 
path from $i$ to $j$ in $G$.  Similarly, the \emph{transitive reduction} (denoted $G^{\min}$) is the minimal graph with equivalent transitivity to $G$, \ie, the 
graph with the fewest edges such that $\left(G^{\min}\right)^\infty = G^\infty$.

Let $\X = \{x_1, x_2,\dots,x_n\}$ denote the training set of $n$ items.  A \emph{Euclidean embedding} is a function $g:\X\rightarrow\R^d$ which maps $\X$ into a 
$d$-dimensional space equipped with the Euclidean ($\ell_2$) metric:
\[
 \|x-y\|_2 = \sqrt{(x-y)\trans(x-y)}.
\]

For any matrix $B$, let $B_i$ denote its $i\th$ column vector.   A symmetric matrix $A \in \R^{n\times n}$ has 
a spectral decomposition $A=V{\Lambda}V\trans$, where $\Lambda = \diag(\lambda_1, \lambda_2,\dots,\lambda_n)$ is a diagonal matrix containing the
eigenvalues of $A$, and $V$ contains the eigenvectors of $A$.  We adopt the convention that eigenvalues (and corresponding
eigenvectors) are sorted in descending order.  $A$ is 
\emph{positive semi-definite}~(PSD), denoted by $A\succeq0$, if each eigenvalue is non-negative: $\lambda_i\geq0, ~i = 1, \ldots, n$.  Finally, a PSD matrix $A$ gives rise to the Mahalanobis distance function
\[
\|x-y\|_A = \sqrt{(x-y)\trans A(x-y)}.
\]

\section{A graphical view of similarity}
\label{sec:poe}
Before we can construct an embedding algorithm for multi-modal data, we must first establish the form of side-information that will drive the algorithm, i.e., the similarity measurements that will be collected from human labelers.  There is an extensive body of work on the
topic of constructing a geometric representation of data to fit perceptual similarity measurements.  Primarily, this work falls under the umbrella of
multi-dimensional scaling (MDS), in which perceptual similarity is modeled by numerical responses corresponding to the perceived ``distance'' between a pair
of items, \eg, on a similarity scale of 1--10.  (See~\cite{cox94,borg05} for comprehensive overviews of MDS techniques.)

Because ``distances'' supplied by test subjects may not satisfy metric properties --- in particular, they may not correspond to Euclidean distances --- 
alternative \emph{non-metric} MDS (NMDS) techniques have been proposed~\citep{kruskal64}.  Unlike classical or metric MDS techniques, which seek to preserve
quantitative distances, NDMS seeks an embedding in which the rank-ordering of distances is preserved.  

Since NMDS only needs the rank-ordering of distances, and not the distances themselves, the task of collecting similarity measurements can be simplifed by
asking test subjects to \emph{order pairs of points by similarity}:
\begin{center}
``Are $i$ and $j$ more similar than $k$ and $\ell$?''
\end{center}
or, as a special case, the ``triadic comparison''
\begin{center}
``Is $i$ more similar to $j$ or $\ell$?''
\end{center}
Based on this kind of \emph{relative comparison} data, the embedding problem can be formulated as follows. Given is a set of objects $\X$, and a set of similarity measurements $\C = \{(i,j,k,\ell)\} \subseteq\X^4$, where a tuple $(i,j,k,\ell)$ is interpreted as ``$i$ and $j$ are more similar than $k$ and $\ell$.''  (This formulation subsumes the triadic comparisons model when $i=k$.) The goal is to find an embedding function $g:\X\rightarrow\R^d$ such that
\begin{equation}
\forall (i,j,k,\ell) \in \C: ~ \|g(i) - g(j)\|^2 + 1 < \|g(k) - g(\ell)\|^2. \label{eq:constraints}
\end{equation}
The unit margin is forced between the constrained distances for numerical stability.

\cite{gnmds} work with this kind of relative comparison data and describe
a generalized NMDS algorithm (GNMDS), which formulates the embedding problem as a semi-definite program.  \cite{schultz04} derive a similar algorithm which
solves a quadratic program to learn a linear, axis-aligned transformation of data to fit relative comparisons.

Previous work on relative comparison data often treats each measurement $(i,j,k,\ell)\in\C$ as effectively independent~\citep{schultz04,gnmds}.  However, 
due to their semantic interpretation as encoding pairwise similarity comparisons, and the fact that a pair $(i,j)$ may participate in several 
comparisons with other pairs, 
there may be some \emph{global} structure to $\C$ which these previous methods are unable to exploit.

In Section \ref{sec:similarity}, we develop a graphical framework to infer and interpret the global structure exhibited by the constraints of the embedding problem. Graph-theoretic algorithms presented in Section \ref{sec:simplification} then exploit this representation to filter this collection of noisy similarity measurements for consistency and redundancy. The final, reduced set of relative comparison constraints defines a partial order, making for a more robust and efficient embedding problem.



\subsection{Similarity graphs}
\label{sec:similarity}

\begin{figure}
\centering
\hfill\includegraphics[width=0.25\textwidth]{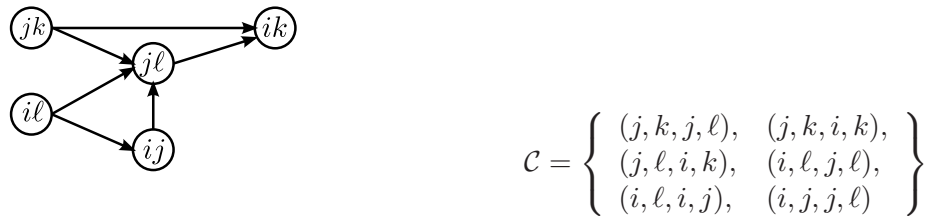}%
\hfill$\C =\left\{\begin{array}{ll}
(j,k,j,\ell),& (j,k,i,k),\\
(j,\ell,i,k),& (i,\ell,j,\ell),\\
(i,\ell,i,j),& (i,j,j,\ell)
\end{array}\right\}$\hfill\vfill
\caption{The graph representation (left) of a set of relative comparisons (right).\label{fig:constraintdag}}
\end{figure}

To gain more insight into the underlying structure of a collection of comparisons $\C$, we can represent $\C$ as a directed graph over $\X^2$.  Each 
vertex in the graph corresponds to a pair $(i,j)\in \X^2$, and an edge from $(i,j)$ to $(k,\ell)$ corresponds to a similarity measurement 
$(i,j,k,\ell)$ (see Figure~\ref{fig:constraintdag}).  Interpreting $\C$ as a graph will allow us to infer properties of \emph{global} (graphical) 
structure of $\C$.  In particular, two facts become immediately apparent: 
\begin{enumerate}
\item If $\C$ contains cycles, then there exists no embedding which can satisfy $\C$.  
\item If $\C$ is acyclic, any embedding that satisfies the transitive reduction $\C^{\min}$ also satisfies $\C$.
\end{enumerate}

The first fact implies that no algorithm can produce an embedding which satisfies all measurements if the graph is cyclic.  
In fact, the converse of this statement is also true: if $\C$ is acyclic, then an embedding exists in which all similarity 
measurements are preserved (see Appendix~\ref{sec:diameter}).
If $\C$ is cyclic, however, by analyzing the graph, it is possible to identify an ``unlearnable'' subset of $\C$ which must be violated by any embedding.

Similarly, the second fact exploits the transitive nature of distance comparisons.  In the example depicted in Figure~\ref{fig:constraintdag}, any $g$ that
satisfies $(j,k,j,\ell)$ and $(j,\ell,i,k)$ must also satisfy $(j,k,i,k)$.  In effect, the constraint $(j,k,i,k)$ is redundant, and may also be safely
omitted from $\C$.

These two observations allude to two desirable properties in $\C$ for embedding methods: \emph{transitivity} and \emph{anti-symmetry}.  Together with irreflexivity, these fit the defining characteristics of a \emph{partial order}. Due to subjectivity and inter-labeler disagreement, however, most collections of relative comparisons will not define a partial order. Some graph processing, presented next, based on an approximate maximum acyclic subgraph algorithm, can  reduce them to a partial order.

\subsection{Graph simplification}
\label{sec:simplification}

Because a set of similarity measurements $\C$ containing cycles cannot be embedded in any Euclidean space, $\C$ is inherently inconsistent.
Cycles in $\C$ therefore constitute a form of \emph{label noise}.  As noted by~\cite{angelova04}, label noise can have adverse effects on both model
complexity and generalization.  This problem can be mitigated by detecting and pruning noisy (confusing) examples, and training on a reduced, but 
certifiably ``clean'' set~\citep{angelova05, vezhnevets07}.
Unlike most settings, where the noise process affects each label independently --- \eg, random classification noise~\citep{angluin88} --- the graphical 
structure of interrelated relative comparisons can be exploited to detect and prune inconsistent measurements.  By eliminating similarity measurements which 
cannot be realized by any embedding, the optimization procedure can be carried out more efficiently and reliably on a reduced constraint set.


Ideally, when eliminating edges from the graph, we would like to retain as much information as possible.  Unfortunately, this is equivalent to the
\emph{maximum acyclic subgraph} problem, which is NP-Complete~\citep{garey79}. 
A $\nicefrac{1}{2}$-approximate solution can be achieved by a simple greedy algorithm (Algorithm~\ref{alg:amas})~\citep{berger90}.

\begin{algorithm}
\caption{Approximate maximum acyclic subgraph}
\label{alg:amas}
\begin{algorithmic}
    \STATE{{\bfseries Input}: Directed graph $G = (V,E)$}
    \STATE{{\bfseries Output}: Acyclic graph $G'$}
    \STATE{$E' \leftarrow \emptyset$}
    \FOR{each $(u,v) \in E$ in random order}
        \IF{$E' \cup \{(u,v)\}$ is acyclic}
            \STATE{$E' \leftarrow E' \cup \{(u,v)\}$}
        \ENDIF%
    \ENDFOR%
    \STATE{$G' \leftarrow (V, E')$}
\end{algorithmic}
\end{algorithm}

Once a consistent subset of similarity measurements has been produced, it can be simplified further by pruning redundancies.  In the graph view of
similarity measurements, redundancies can be easily removed by computing the transitive reduction of the graph~\citep{aho:131}.

By filtering the constraint set for
consistency, we ensure that embedding algorithms are not learning from spurious information.  Additionally, pruning the constraint set by transitive
reduction focuses embedding algorithms on the most important core set of constraints while reducing overhead due to redundant information.


\section{Partial order embedding}

\label{sec:parametric}
%

%
Now that we have developed a language for expressing similarity between items, we are ready to formulate the embedding problem.
In this section, we develop an algorithm that learns a representation of data consistent with a collection of relative similarity measurements, and 
allows to map unseen data into the learned similarity space after learning.  In order to accomplish this, we will assume a feature representation for $\X$.  By parameterizing the embedding function $g$ in terms
of the feature representation, we will be able to apply $g$ to any point in the feature space, thereby generalizing to data outside of the training set.


\subsection{Linear projection}
To start, we assume that the data originally lies in some Euclidean space, \ie, $\X\subset \R^D$.
There are of course many ways to define an 
embedding function $g:\R^D\rightarrow\R^d$.  Here, we will restrict attention to embeddings parameterized by a linear projection matrix $M$, so 
that for a vector $x\in\R^D$, 
\[
g(x)\doteq Mx.
\]
Collecting the vector representations of the training set as columns of a matrix $X\in\R^{D\times n}$, the inner product matrix of 
the embedded points can be characterized as
\begin{equation}
A = X\trans M\trans M X.\label{eq:innerproduct}
\end{equation}

Now, for a relative comparison $(i,j,k,\ell)$, we can express the distance constraint (\ref{eq:constraints}) between embedded points as follows:
\begin{equation}
(X_i - X_j)\trans M\trans M (X_i - X_j) + 1 \leq (X_k - X_\ell)\trans M\trans M (X_k - X_\ell).\label{eq:constraintquadratic}
\end{equation}
These inequalities can then be used to form the constraint set of an optimization problem to solve for $M$.  Because, in general, $\C$ may not be 
satisfiable by a linear projection of $\X$, we soften the constraints by introducing a slack variable $\xi_{ijk\ell}\geq0$ for each constraint, 
and minimize the empirical hinge loss over constraint violations $\nicefrac{1}{|\C|}\sum_\C\xi_{ijk\ell}$.
This choice of loss function can be interpreted as a 
generalization of ROC area (see Appendix~\ref{sec:gauc}).  

To avoid over-fitting, we introduce a regularization term $\trace(M\trans M)$, and a trade-off parameter $\beta>0$ to control the balance 
between regularization and loss minimization.  This leads to a regularized risk minimization objective:
\begin{eqnarray}
\min_{M,\xi\geq0} & & \trace(M\trans M) + \frac{\beta}{|\C|}\sum_\C\xi_{ijk\ell} \label{opt:lin-M} \\
\text{s.t.} & & (X_i - X_j)\trans M\trans M (X_i - X_j) + 1 \leq (X_k - X_\ell)\trans M\trans M (X_k - X_\ell) + \xi_{ijk\ell}, \nonumber \\ & & \forall (i,j,k,\ell) \in \C. \nonumber
\end{eqnarray}
After learning $M$ by solving this optimization problem, the embedding can be extended to out-of-sample points $x'$ by applying the projection: $x' \mapsto Mx'$.

Note 
that the distance constraints in (\ref{opt:lin-M}) involve differences of 
quadratic terms, and are therefore not convex.  However, since $M$ only appears in the form $M\trans M$ in (\ref{opt:lin-M}), we can equivalently express the optimization problem in terms of a 
positive semi-definite matrix $W\doteq M\trans M$. This change of variables results in Algorithm~\ref{alg:lpoe}, a \emph{convex} optimization problem, more specifically a semi-definite programming (SDP) problem~\citep{boyd04}, since objective and constraints are linear in $W$, including the linear matrix inequality $W \succeq 0$. The corresponding inner product matrix is
\[
A = X\trans W X.
\]

Finally, after 
the optimal $W$ is found, the embedding function $g$ can be recovered from the spectral decomposition of $W$:
\[
W = V\Lambda V\trans \quad \Rightarrow \quad g(x) = \Lambda^{1/2} V\trans x.
\]

\begin{algorithm}
\caption{Linear partial order embedding (LPOE)}\label{alg:lpoe}%
\begin{algorithmic}
    \STATE {\bfseries Input:} $n$ objects $\X$,\\partial order $\C$,\\data matrix $X\in\R^{D\times n}$,\\$\beta> 0$
    \STATE {\bfseries Output:} mapping $g:\X\rightarrow \R^{d}$
    \begin{align*}
        \min_{W,\xi} ~~~ &\hfill \trace\left(W\right) + \frac{\beta}{|\C|} \sum_{\C} \xi_{ijk\ell}\\
        d(x_i,x_j) &\doteq \left( X_i - X_j\right)\trans W \left(X_i - X_j\right)\\
        d(x_i,x_j) + 1 & \leq d(x_k,x_\ell) + \xi_{ijk\ell}\\
        \xi_{ijk\ell} &\geq 0
            &\forall (i,j,k,\ell) \in \C\\
        W & \succeq 0\\
    \end{align*}
\end{algorithmic}
\end{algorithm}

\subsection{Non-linear projection via kernels}

The formulation in Algorithm~\ref{alg:lpoe} can be generalized to support non-linear embeddings by the use of kernels, following the method of~\cite{globerson07}: we first map the
data into a reproducing kernel Hilbert space (RKHS) $\H$ via a feature map $\phi$ with corresponding kernel function $k(x,y) = \langle \phi(x), \phi(y) \rangle_\H$; then, the data is mapped to $\R^d$ by a linear projection 
$M:\H\rightarrow\R^d$.  The embedding function $g:\X\rightarrow\R^d$ is the therefore the composition of the projection $M$ with $\phi$:
\[
g(x) = M(\phi(x)).
\]
Because $\phi$ may be non-linear, this allows to learn a non-linear embedding $g$.

More precisely, we consider $M$ as being comprised of $d$ elements of $\H$, \ie, $\{\omega_1,\omega_2,\dots,\omega_d\} \subseteq \H$.  The embedding $g$ can thus be 
expressed as
\[
g(x) = \left(\langle \omega_p, \phi(x)\rangle_\H \right)_{p=1}^d,
\]
where $(\cdot)_{p=1}^d$ denotes concatenation over $d$ vectors.

Note that in general, $\H$ may be infinite-dimensional, so directly optimizing $M$ may not be feasible.  However, by appropriately regularizing 
$M$, we may invoke the generalized representer theorem~\citep{representer}.  Our choice of regularization is the Hilbert-Schmidt norm of $M$, which,
in this case, reduces to
\[
\|M\|_{\text{HS}}^2 = \sum_{p=1}^d \langle \omega_p, \omega_p \rangle_\H.
\]
With this choice of regularization, it follows from the generalized representer theorem that at an optimum, each $\omega_p$ must lie in the span of the training data, \ie,
\[
\omega_p = \sum_{i=1}^n N_{pi} \phi(x_i), \quad p = 1, \ldots, d,
\]
for some real-valued matrix $N \in \R^{d \times n}$.  If $\Phi$ is a matrix representation of $\X$ in $\H$ (\ie, $\Phi_i=\phi(x_i)$ for $x_i\in\X$), then the projection operator
$M$ can be expressed as
\begin{equation}\label{eq:M-N}
M = N\Phi\trans.
\end{equation}

We can now reformulate the embedding problem as an optimization over $N$ rather than $M$. Using (\ref{eq:M-N}), the regularization term can be expressed as
\[
\| M \|_{\text{HS}}^2 = \trace(\Phi N\trans N \Phi\trans) = \trace(N\trans N \Phi\trans \Phi) = \trace(N\trans NK),
\]
where $K$ is the kernel matrix over $\X$:
\[
K = \Phi\trans \Phi, \quad \text{with}~~K_{ij} = \langle \phi(x_i), \phi(x_j) \rangle_\H = k(x_i,x_j).
\]
To formulate the distance constraints in terms of $N$, we first express the embedding $g$ in terms of $N$ and the kernel function:
\[
g(x) = M(\phi(x)) = N \Phi^T (\phi(x)) = N\left(\langle\Phi_i,\phi(x)\rangle_\H\right)_{i=1}^n = N\left(k(x_i,x)\right)_{i=1}^n = NK_x,
\]
where $K_x$ is the column vector formed by evaluating the kernel function $k$ at $x$ against the training set.
The inner product matrix of embedded points can therefore be expressed as
\[
A = K N\trans N K,
\]
which allows to express the distance constraints in terms of $N$ and the kernel matrix $K$:
\[
(K_i - K_j)\trans N\trans N (K_i - K_j) + 1 \leq (K_k - K_\ell)\trans N\trans N (K_k - K_\ell).
\]
The embedding problem thus amounts to solving the following optimization problem in $N$ and $\xi$:
\begin{eqnarray}
\min_{N,\xi\geq0} & & \trace(N\trans N K) + \frac{\beta}{|\C|}\sum_\C\xi_{ijk\ell} \label{opt:nonlin-N} \\
\text{s.t.} & & (K_i - K_j)\trans N\trans N (K_i - K_j) + 1 \leq (K_k - K_\ell)\trans N\trans N (K_k - K_\ell) + \xi_{ijk\ell}, \nonumber \\ & & \forall (i,j,k,\ell) \in \C. \nonumber
\end{eqnarray}

Again, the distance constraints in (\ref{opt:nonlin-N}) are non-convex due to the differences of quadratic terms. 
And, as in the previous section, $N$ only appears in the form of inner products $N\trans N$ in (\ref{opt:nonlin-N}) --- both in the constraints, and in the regularization term --- so we can 
again derive a convex optimization problem by changing variables to $W\doteq N{\trans}N \succeq 0$. The resulting embedding problem is listed as Algorithm~\ref{alg:kpoe}, again a semi-definite programming problem (SDP), with an objective function and constraints that are linear in $W$.



After solving for $W$, the 
matrix $N$ can be recovered by computing the spectral decomposition $W=V\Lambda V\trans$, and defining $N=\Lambda^{1/2}V\trans$.  The resulting embedding function takes the form:
\[
g(x)=\Lambda^{1/2}V\trans K_x.
\]

As in~\cite{schultz04}, this formulation can be interpreted as learning a Mahalanobis distance metric $\Phi W \Phi\trans$ over $\H$.
More generally, we can view this as a form of kernel learning, where the kernel matrix $A$ is restricted to the set
\begin{equation}
A \in \left\{ KWK ~:~ W\succeq 0 \right\}.\label{eq:kpoerange}
\end{equation}

\begin{algorithm}
\caption{Kernel partial order embedding (KPOE)}\label{alg:kpoe}%
\begin{algorithmic}
    \STATE {\bfseries Input:} $n$ objects $\X$,\\partial order $\C$,\\kernel matrix $K$,\\$\beta> 0$
    \STATE {\bfseries Output:} mapping $g:\X\rightarrow \R^{n}$
    \begin{align*}
        \min_{W,\xi} ~~~ &\hfill \trace\left(WK\right) + \frac{\beta}{|\C|} \sum_{\C} \xi_{ijk\ell}\\
        d(x_i,x_j) &\doteq \left( K_i - K_j\right)\trans W \left(K_i - K_j\right)\\
        d(x_i,x_j) + 1 & \leq d(x_k,x_\ell) + \xi_{ijk\ell}\\
        \xi_{ijk\ell} &\geq 0
            &\forall (i,j,k,\ell) \in \C\\
        W & \succeq 0\\
    \end{align*}
\end{algorithmic}
\end{algorithm}

\subsection{Connection to GNMDS}
\label{sec:parametric:gnmds}
We conclude this section by drawing a connection between Algorithm~\ref{alg:kpoe} and the generalized non-metric MDS (GNMDS) algorithm of~\cite{gnmds}.

First, we observe that the $i$-th column, $K_i$, of the kernel matrix $K$ can be expressed in terms of $K$ and the $i\th$ standard basis vector $e_i$:
\[
K_i = Ke_i.
\]
From this, it follows that distance computations in Algorithm~\ref{alg:kpoe} can be equivalently expressed as
\begin{align}
d(x_i,x_j) &= (K_i - K_j)\trans W (K_i - K_j)\nonumber\\
&= (K(e_i - e_j))\trans W (K(e_i - e_j))\nonumber\\
&= (e_i - e_j)\trans K\trans W K  (e_i - e_j).\label{eq:kernelbasis}
\end{align}
If we consider the extremal case where $K = I$, \ie, we have no prior feature-based knowledge of similarity between points, then
Equation~\ref{eq:kernelbasis} simplifies to
\[
d(x_i,x_j) = (e_i - e_j)\trans IWI (e_i - e_j) = W_{ii} + W_{jj} - W_{ij} - W_{ji}.
\]
Therefore, in this setting, 
rather than defining a feature transformation,
$W$ directly encodes the
inner products between embedded training points.  Similarly, the regularization term becomes
\[
\trace(WK) = \trace(WI) = \trace(W).
\]
Minimizing the regularization term can be interpreted as minimizing 
a convex upper bound on the rank of $W$~\citep{boyd04}, which expresses a preference for low-dimensional embeddings.
Thus, by setting $K=I$ in Algorithm~\ref{alg:kpoe}, we directly recover the GNMDS algorithm.

Note that directly learning inner products between embedded training data points rather than a feature transformation does not allow a meaningful out-of-sample extension, to embed unseen data points.  On the other hand, by Equation~\ref{eq:kpoerange}, it is clear that the 
algorithm optimizes over the entire cone of PSD matrices.  Thus, if $\C$ defines a DAG, we could exploit the fact that a partial order over distances always allows an embedding which satisfies all constraints in $\C$ (see Appendix~\ref{sec:diameter}) to eliminate the slack variables from the program entirely.

\section{Multiple kernel embedding}
\label{sec:mkl}


In the previous section, we derived an algorithm to learn an optimal projection from a kernel space $\H$ to $\R^d$ such that Euclidean distance between 
embedded points conforms to perceptual similarity.  If, however, the data is heterogeneous in nature, it may not be realistic to assume that a single
feature representation can sufficiently capture the inherent structure in the data.  For example, if the objects in question are images, it may be natural to
encode texture information by one set of features, and color in another, and it is not immediately clear how to reconcile these two disparate sources of 
information into a single kernel space.  

However, by encoding each source of information independently by separate feature spaces $\H^1,\H^2,\dots$ --- equivalently, kernel matrices $K^1,K^2,\dots$
--- we can formulate a multiple kernel learning algorithm to optimally combine all feature spaces into a single, unified embedding space.  In this section,
we will derive a novel, projection-based approach to multiple-kernel learning and extend Algorithm~\ref{alg:kpoe} to support heterogeneous data in a
principled way.

\subsection{Unweighted combination}
Let $K^{1}, K^{2},\dots,K^{m}$ be a set of kernel matrices, each with a corresponding feature map $\phi^{p}$ and RKHS $\H^p$, for $p\in1,\dots,m$.  
One natural way to combine the kernels is to look at the product space, which is formed by concatenating the feature maps:
\[
\phi(x_i) = (\phi^1(x_i), \phi^2(x_i), \dots, \phi^m(x_i)) = {\left(\phi^{p}(x_i)\right)}_{p=1}^{m}.
\]
Inner products can be computed in this space by summing across each feature map:
\[
\langle \phi(x_i), \phi(x_j) \rangle = \sum_{p=1}^{m} \left\langle \phi^{p}(x_i),
\phi^{p}(x_j)\right\rangle_{\H^p}.
\]
resulting in the {\em sum-kernel\/} --- also known as the \emph{average kernel} or \emph{product space kernel}.  The corresponding kernel matrix can be conveniently represented 
as the unweighted sum of the base kernel matrices: 
\begin{equation}
\widehat{K} = \sum_{p=1}^m K^{p}. \label{eq:sumkernel}
\end{equation}

Since $\widehat{K}$ is a valid kernel matrix itself, we could use $\widehat{K}$ as input for Algorithm~\ref{alg:kpoe}.  As a result, the algorithm would 
learn a kernel from the family
\begin{align*}
\K_1 &= \left\{ \left( \sum_{p=1}^m K^p \right) W \left( \sum_{p=1}^m K^p \right) ~:~ W \succeq 0\right\}\\
            &= \left\{ \sum_{p,q=1}^m K^p W K^q ~:~ W \succeq 0\right\}.
\end{align*}

\subsection{Weighted combination}
Note that $\K_1$ treats each kernel equally; it is therefore impossible to distinguish \emph{good} features (\ie, those which can be transformed to best fit
$\C$) from \emph{bad} features, and as a result, the quality of the resulting embedding may be degraded.
To combat this phenomenon, it is common to learn a scheme for weighting the kernels in a way which is optimal for a particular task.  The most common 
approach to combining the base kernels is to take a positive-weighted sum
\[
\sum_{p=1}^m \mu_p K^{p}\hspace{4em}(\mu_p\geq 0),
\]
where the weights $\mu_p$ are learned in conjunction with a predictor~\citep{lanckriet04,sonnenburg06,bach08,cortes09}.  Equivalently, this can be viewed as learning a feature map
\[
\phi(x_i) = \left(\sqrt{\mu_p} \phi^{p}(x_i)\right)_{p=1}^m,
\]
where each base feature map has been scaled by the corresponding weight $\sqrt{\mu_p}$.

Applying this reasoning to 
learning an embedding that conforms to perceptual similarity, one might consider a two-stage approach to parameterizing the embedding (Figure~\ref{fig:mkl:embedding:early}): first construct a weighted kernel combination, and then project from the combined kernel space.  \cite{lin09} formulate a dimensionality reduction algorithm in this way.  In the present setting, this would be achieved by simultaneously optimizing $W$ and $\mu_p$ to choose an inner product matrix $A$ from the set
\begin{align}
\K_2 &= \left\{ \left(\sum_{p=1}^m \mu_p K^{p}\right) W \left( \sum_{p=1}^m \mu_p K^{p}\right) ~:~ W\succeq 0, \forall p,~\mu_p\geq 0\right\}\nonumber\\
&= \left\{  \sum_{p,q=1}^m \mu_p K^{p} W \mu_q K^{q} ~:~ W\succeq 0, \forall p,~\mu_p \geq 0\right\}.\label{kernelclasscross}
\end{align}
The corresponding distance constraints, however, contain differences of terms cubic in the optimization variables $W$ and $\mu_p$:
\[
\sum_{p,q} \left(K^p_i - K^p_j \right)\trans \mu_p W \mu_q \left(K^q_i - K^q_j \right) + 1 \leq
\sum_{p,q} \left(K^p_k - K^p_\ell \right)\trans \mu_p W \mu_q \left(K^q_k - K^q_\ell \right), 
\]
and are therefore non-convex and difficult to optimize.  Even simplifying the class by removing cross-terms, \ie, restricting $A$ to the form
\begin{equation}
\K_3 = \left\{  \sum_{p=1}^m \mu_p^2 K^{p} W K^{p} ~:~ W\succeq 0, \forall p,~ \mu_p \geq 0\right\},\label{kernelclassnocross}
\end{equation}
still leads to a non-convex problem, due to the difference of positive quadratic terms introduced by distance 
calculations: 
\[
\sum_{p=1}^m \left(K^p_i - K^p_j \right)\trans \mu_p^2 W \left( \mu_p K^p_i - K^p_j \right) + 1 \leq
\sum_{p=1}^m \left(K^p_k - K^p_\ell \right)\trans \mu_p^2 W \left( \mu_p K^p_k - K^p_\ell \right).
\]
However, a more subtle problem with this formulation lies in the assumption 
that a single weight can characterize the contribution of a kernel to the optimal embedding.  In general, different kernels 
may be more or less informative on different subsets of $\X$ or different regions of the corresponding 
feature space. Constraining the embedding to a single metric $W$ with a single weight $\mu_p$ for each kernel may be too restrictive to take 
advantage of this phenomenon.

\subsection{Concatenated projection}
We now return to the original intuition behind Equation~\ref{eq:sumkernel}.  The sum-kernel represents the inner product between points in the space formed
by concatenating the base feature maps $\phi^p$.  The sets $\K_2$ and $\K_3$ characterize projections of the weighted combination space, and turn out to not
be amenable to efficient optimization (Figure~\ref{fig:mkl:embedding:early}).  This can be seen as a consequence of prematurely combining kernels prior to projection.

Rather than projecting the (weighted) concatenation of $\phi^p(\cdot)$, we could alternatively concatenate learned projections $M^p(\phi^p(\cdot))$, as 
illustrated by Figure~\ref{fig:mkl:embedding:late}.  
Intuitively, by defining the embedding as the concatenation of $m$ different projections, we allow the algorithm to learn an ensemble of projections, each tailored to its corresponding domain space and jointly optimized to produce an optimal space.
By contrast, the previously discussed formulations apply essentially the same projection to 
each (weighted) feature space, and are thus much less flexible than our proposed approach.
Mathematically, an embedding function of this form can be expressed as the concatenation
\[
g(x) = \left( M^p \left(\phi^p(x)\right) \right)_{p=1}^m.
\]

Now, given this characterization of the embedding function, we can adapt Algorithm~\ref{alg:kpoe} to optimize over multiple kernels. As in the single-kernel case, we introduce regularization terms for each projection operator $M^p$
\[
\sum_{p=1}^m \|M^p\|_{\text{HS}}^2
\]
to the objective function.  Again, by invoking the representer theorem for each $M^p$, it follows that 
\[
M^p = N^p \left(\Phi^p\right)\trans,
\]
for some matrix $N^p$, which allows to reformulate the embedding problem as a joint optimization over $N^p,~p = 1, \ldots, m$ rather than $M^p,~p = 1, \ldots, m$. Indeed, the regularization terms can be expressed as 
\begin{equation}\label{eq:Np-reg}
\sum_{p=1}^m \|M^p\|_{\text{HS}}^2 = \trace((N^p)\trans (N^p) K^p).
\end{equation}
The embedding function can now be rewritten as
\[
g(x) = \left( M^p \left(\phi^p(x)\right) \right)_{p=1}^m = \left( N^p K^p_x \right)_{p=1}^m,
\]
and the inner products between embedded points take the form:
\begin{align*}
A_{ij} = \langle g(x_i), g(x_j) \rangle &= \sum_{p=1}^m \left(N^p K^p_i\right)\trans \left(N^p K^p_j \right)\nonumber\\
&= \sum_{p=1}^m (K^p_i)\trans (N^p)\trans (N^p) (K^p_j).
\end{align*}
Similarly, squared Euclidean distance also decomposes by kernel:
\begin{equation}\label{eq:Np-dist}
\|g(x_i) - g(x_j)\|^2 = \sum_{p=1}^m \left(K^p_i - K^p_j\right)\trans (N^p)\trans (N^p) \left(K^p_i - K^p_j\right).
\end{equation}



Finally, since 
the matrices $N^p,~p=1, \ldots,m$ only appear in the form of inner products in (\ref{eq:Np-reg}) and (\ref{eq:Np-dist}), we may instead optimize over PSD matrices $W^p = (N^p)\trans (N^p)$. This renders the regularization terms (\ref{eq:Np-reg}) and distances (\ref{eq:Np-dist}) linear in the optimization variables $W^p$. Extending Algorithm~\ref{alg:kpoe} to this parameterization of $g(\cdot)$ therefore results in an SDP, which is listed as Algorithm~\ref{alg:mkpoe}.  To solve the SDP, we implemented a gradient descent solver, which is described in Appendix~\ref{sec:gradient}.


The class of kernels over which Algorithm~\ref{alg:mkpoe} optimizes can be expressed algebraically as
\begin{equation}
\K_4 = \left\{  \sum_{p=1}^m K^{p} W^p K^{p} ~:~ \forall p,~W^p\succeq 0 \right\}\label{kernelclassconcat}.
\end{equation}
Note that $\K_4$ contains $\K_3$ as a special case when all $W^p$ are positive scalar multiples of each-other.  However, $\K_4$ 
leads to a convex optimization problem, where $\K_3$ does not.  


Table~\ref{tab:kernelcombination} lists the block-matrix formulations of each of the kernel combination rules described in this section.  It is worth noting
that it is certainly valid to first form the unweighted combination kernel $\widehat{K}$ and then use $\K_1$ (Algorithm~\ref{alg:kpoe}) to learn an optimal
projection of the product space.  However, as we will demonstrate in Section~\ref{sec:exp}, our proposed multiple-kernel formulation ($\K_4$) outperforms 
the simple unweighted combination rule in practice.

\begin{table}
\begin{tabular}{ll}
Kernel class & Learned kernel matrix\\
\hline
\hline
$\K_1 = \left\{ \sum_{p,q} K^p W K^q \right\}$ & $\left[K^1 + K^2 + \dots + K^m\right][W]\left[K^1 + K^2 + \dots + K^m\right]$\\
&\\
$\K_2 = \left\{ \sum_{p,q} \mu_p \mu_q K^p W K^q \right\}$ & 
$ 
\left[
    \begin{array}{c}
        K^1\\
        K^2\\
        \vdots\\
        K^m 
    \end{array}
\right]\trans
\left[ 
    \begin{array}{cccc}
        \mu_1^2 W       & \mu_1\mu_2 W  & \cdots    & \mu_1\mu_m W\\
        \mu_2\mu_1 W    & \mu_2^2 W     & \cdots    & \vdots\\
        \vdots          &               & \ddots    & \\
        \mu_m\mu_1 W    &               &           & \mu_m^2 W\\
    \end{array}
\right]
\left[
    \begin{array}{c}
        K^1\\
        K^2\\
        \vdots\\
        K^m 
    \end{array}
\right]
$\\
&\\
$\K_3 = \left\{ \sum_p \mu_p^2 K^p W K^p \right\}$ & 
$ 
\left[
    \begin{array}{c}
        K^1\\
        K^2\\
        \vdots\\
        K^m 
    \end{array}
\right]\trans
\left[ 
    \begin{array}{cccc}
        \mu_1^2 W    & 0             & \cdots    & 0 \\
        0               & \mu_2^2 W  & \cdots    & \vdots\\
        \vdots          &               & \ddots    & \\
        0               &               &           & \mu_m^2 W\\
    \end{array}
\right]
\left[
    \begin{array}{c}
        K^1\\
        K^2\\
        \vdots\\
        K^m 
    \end{array}
\right]
$\\
&\\
$\K_4 = \left\{ \sum_p K^p W^p K^p \right\}$ & 
$ 
\left[
    \begin{array}{c}
        K^1\\
        K^2\\
        \vdots\\
        K^m 
    \end{array}
\right]\trans
\left[ 
    \begin{array}{cccc}
        W^1    & 0             & \cdots    & 0 \\
        0               & W^2  & \cdots    & \vdots\\
        \vdots          &               & \ddots    & \\
        0               &               &           & W^m\\
    \end{array}
\right]
\left[
    \begin{array}{c}
        K^1\\
        K^2\\
        \vdots\\
        K^m 
    \end{array}
\right]
$\\
\end{tabular}
\caption{Block-matrix formulations of metric learning for multiple-kernel formulations ($\K_1$--$\K_4$).  Each $W^p$ is taken to be positive semi-definite.
Note that all sets are equal when there is only one base kernel.
\label{tab:kernelcombination}}
\end{table}

\begin{figure}
\begin{center}
\subfigure[Weighted combination ($\K_2$)]{\includegraphics[height=0.12\textheight]{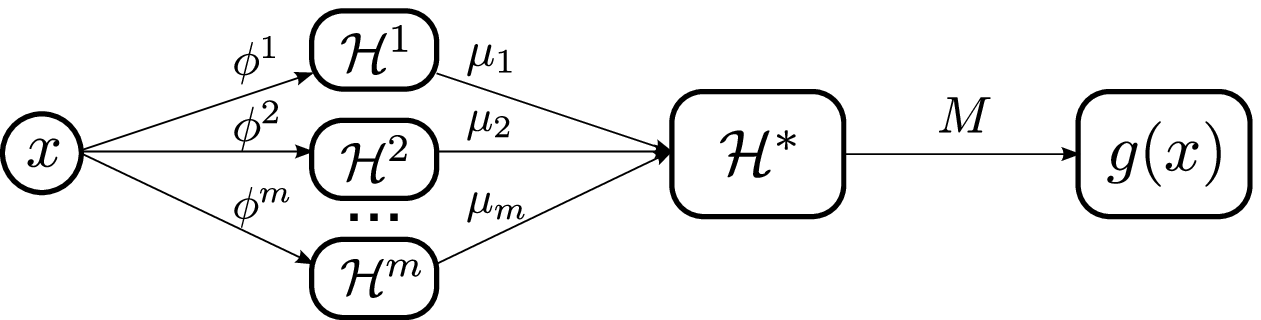}\label{fig:mkl:embedding:early}}\\%
\subfigure[Concatenated projection ($\K_4$)]{\includegraphics[height=0.12\textheight]{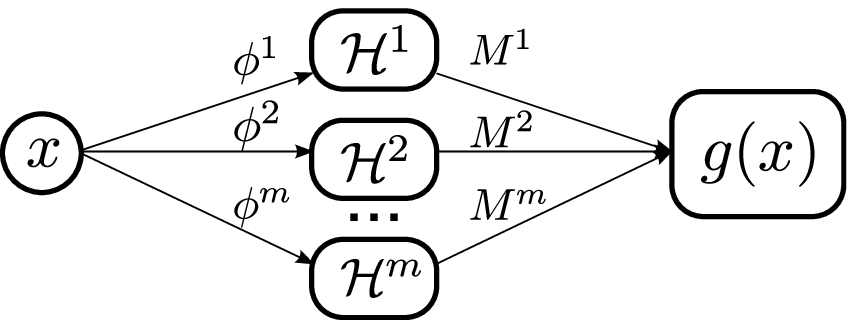}\label{fig:mkl:embedding:late}}%
\end{center}
\caption{Two variants of multiple-kernel embedding.  \subref{fig:mkl:embedding:early} A data point $x\in\X$ is mapped into $m$ feature spaces via 
$\phi^1,\phi^2,\dots,\phi^m$, which are then scaled by $\mu_1,\mu_2,\dots,\mu_m$ to form a weighted feature space $\H^*$, which is subsequently projected 
to the embedding space via $M$.  \subref{fig:mkl:embedding:late} $x$ is first mapped into each kernel's feature space and then its image in each space is 
directly projected into a Euclidean space via the corresponding projections $M^p$.  The projections are jointly optimized to produce the embedding space.\label{fig:mkl:embedding}}
\end{figure}

\begin{algorithm}
\caption{Multiple kernel partial order embedding (MKPOE)}\label{alg:mkpoe}%
\begin{algorithmic}
    \STATE {\bfseries Input:} $n$ objects $\X$,\\partial order $\C$,\\$m$ kernel matrices $K^1,K^2,\dots,K^m$,\\$\beta> 0$
    \STATE {\bfseries Output:} mapping $g:\X\rightarrow \R^{mn}$
    \begin{align*}
        \min_{W^p,\xi} ~~~ &\hfill \sum_{p=1}^m \trace\left(W^pK^p\right) + \frac{\beta}{|\C|} \sum_{\C} \xi_{ijk\ell}\\
        d(x_i,x_j) &\doteq \sum_{p=1}^m \left( K^p_i - K^p_j\right)\trans W^p \left(K^p_i - K^p_j\right)\\
        d(x_i,x_j) + 1 & \leq d(x_k,x_\ell) + \xi_{ijk\ell}\\
        \xi_{ijk\ell} & \geq 0
            &\forall (i,j,k,\ell) \in \C\\
        W^p & \succeq 0 
            &p=1,2,\dots,m\\
    \end{align*}
\end{algorithmic}
\end{algorithm}

\subsection{Diagonal learning}
The MKPOE optimization is formulated as a semi-definite program over $m$ different $n{\times}n$ matrices $W^p$ --- or, as shown in Table~\ref{tab:kernelcombination}, a single $mn{\times}mn$ PSD matrix with a block-diagonal sparsity structure.  Scaling this approach to large data sets
can become problematic, as they require optimizing over multiple high-dimensional PSD matrices.

To cope with larger problems, the optimization problem can be refined to constrain each $W^p$ to the set of diagonal matrices.  If $W^p$ are all
diagonal, positive semi-definiteness is equivalent to non-negativity of the diagonal values (since they are also the eigenvalues of the matrix).  This
allows the constraints $W^p \succeq0$ to be replaced by linear constraints $W^p_{ii} \geq 0$, and the resulting optimization problem is a linear program
(LP), rather than an SDP.  This modification reduces the flexibility of the model, but leads to a much more efficient optimization procedure.

More specifically, our implementation of Algorithm~\ref{alg:mkpoe} operates by alternating gradient descent on $W^p$ and projection onto the feasible set $W^p\succeq0$ (see Appendix~\ref{sec:gradient} for details).  For full matrices, this projection is accomplished by computing the spectral decomposition of each $W^p$, and thresholding the eigenvalues at 0.  For diagonal matrices, this projection is accomplished simply by
\[
W^p_{ii} \mapsto \max\left\{0, W^p_{ii}\right\},
\]
which can be computed in $O(mn)$ time, compared to the $O(mn^3)$ time required to compute $m$ spectral decompositions.  

Restricting $W^{p}$ to be diagonal not only simplifies the problem to linear programming, but carries the added interpretation 
of weighting the contribution of each {(kernel, training point)} pair in the construction of the embedding.  A large value at 
$W^{p}_{ii}$ corresponds to point $i$ being a landmark for the features encoded in $K^{p}$.  Note that each of the formulations listed in
Table~\ref{tab:kernelcombination} has a corresponding diagonal variant, however, as in the full matrix case, only $\K_1$ and $\K_4$ lead to 
convex optimization problems.

\section{Experiments}
\label{sec:exp}

To evaluate our framework for learning multi-modal similarity, we first test the multiple kernel learning formulation on a simple toy taxonomy data set, and then on a real-world data set of musical
perceptual similarity measurements.

\subsection{Toy experiment: Taxonomy embedding}
\label{sec:taxonomy}
For our first experiment, we generated a toy data set from the Amsterdam Library of Object Images (ALOI) data set~\citep{aloi}.  ALOI consists of RGB 
images of 1000 classes of objects against a black background.  Each class corresponds to a single object, and examples are provided of the object under 
varying degrees of out-of-plane rotation.

In our experiment, we first selected 10 object classes, and from each class, sampled 20 examples.  We then constructed an artificial taxonomy over the label
set, as depicted in Figure~\ref{fig:taxonomy}.  Using the taxonomy, we synthesized relative comparisons to span subtrees via their least common ancestor.  
For example, 
\begin{align*}
(Lemon\,\#1,\,Lemon\,\#2,\, Lemon\,\#1,\, Pear\#1),\\
(Lemon\,\#1,\, Pear\#,1,\, Lemon\,\#1,\, Sneaker\#1),
\end{align*}
and so on.  These comparisons are consistent and therefore can be represented as a directed acyclic graph. They are generated so as to avoid redundant, transitive edges in the graph.

For features, we generated five kernel matrices.  The first is a simple linear kernel over the grayscale intensity values of the images, which, roughly
speaking, compares objects by shape.  The other four are Gaussian kernels over histograms in the (background-subtracted) red, green, blue, and 
intensity channels, and these kernels compare objects based on their color or intensity distributions.

We augment this set of kernels with five ``noise'' kernels, each of which was generated by sampling random points from the unit sphere in $\R^3$ and
applying the linear kernel.

\begin{figure}
\centering
\includegraphics[width=0.3\textwidth]{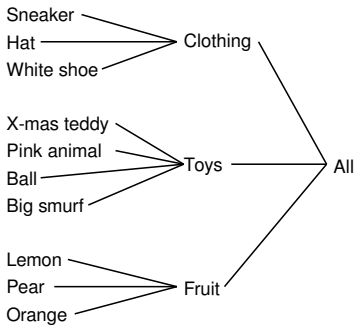}
\caption{The label taxonomy for the experiment in Section~\ref{sec:taxonomy}.\label{fig:taxonomy}}
\end{figure}

The data was partitioned into five 80/20 training and test set splits.  To tune $\beta$, we further split the training set for 5-fold cross-validation, 
and swept over $\beta\in\{10^{-2},10^{-1},\dots,10^{6}\}$.  For each fold, we learned a diagonally-constrained embedding with Algorithm~\ref{alg:mkpoe}, 
using the subset of relative comparisons $(i,j,k,\ell)$ with $i, j, k$ and $\ell$ restricted to the training set.  
After learning the embedding, the held out data (validation or test) was mapped into 
the space, and the accuracy of the embedding was determined by counting the fraction of correctly predicted relative comparisons.  In the validation and 
test sets, comparisons were processed to only include comparisons of the form $(i,j,i,k)$ where $i$ belongs to the validation (or test) set, and $j$ and $k$ 
belong to the training set.

We repeat this experiment for each base kernel individually (\ie, optimizing over $\K_1$ with a single base kernel), as well as the unweighted sum kernel ($\K_1$ with all base kernels), and finally MKPOE ($\K_4$ with all base kernels).  The results are averaged over all training/test splits, and collected in Table~\ref{tab:taxonomy}.  For comparison purposes, we include the prediction accuracy achieved by computing distances in each kernel's native space before
learning.  In each case, the optimized space indeed achieves higher accuracy than the corresponding native space.  (Of course, the random noise kernels
still predict randomly after optimization.)

As illustrated in Table~\ref{tab:taxonomy}\subref{tab:taxonomy:mkl}, taking the unweighted combination of kernels significantly degrades performance (relative to the best kernel) both in the native space (0.718 accuracy versus 0.862 for the linear kernel) and the optimized sum-kernel space (0.861 accuracy for $\K_1$ versus 0.951 for the linear kernel), \ie, the unweighted sum kernel optimized by Algorithm~\ref{alg:kpoe}.
However, MKPOE ($\K_4$) correctly identifies and omits the random noise kernels by assigning them negligible weight, and achieves higher accuracy (0.984) than any of the single kernels (0.951 for the 
linear kernel, after learning).  

\begin{table}
    \centering
\subtable[]{
    \begin{tabular}{lrr}
        \multirow{2}{*}{Base Kernel}                      & \multicolumn{2}{c}{Accuracy}\\
        \cline{2-3}
                        & Native            & $\K_1$\\
        \hline
        Linear          &   \textbf{0.862}  &  \textbf{0.951} \\ 
        Red             &   0.623           &  0.694 \\ 
        Green           &   0.588           &  0.716 \\ 
        Blue            &   0.747           &  0.825 \\ 
        Intensity       &   0.659           &  0.792 \\ 
        Random 1        &   0.484           &  0.506 \\ 
        Random 2        &   0.494           &  0.504 \\ 
        Random 3        &   0.475           &  0.505 \\ 
        Random 4        &   0.495           &  0.510 \\ 
        Random 5        &   0.525           &  0.522 
    \end{tabular}
\label{tab:taxonomy:base}
}%
\hfill\subtable[]{
    \begin{tabular}{lrrr}
        & \multicolumn{3}{c}{Accuracy}\\
        \cline{2-4}
                                        & Native    & $\K_1$ & $\K_4$\\
        \hline
        MKL &   0.718  &  0.861 & \textbf{0.984} \\ 
    \end{tabular}
\label{tab:taxonomy:mkl}
}
    \caption{Average test set accuracy for the experiment of Section~\ref{sec:taxonomy}.  \subref{tab:taxonomy:base} Accuracy is computed by counting the fraction of correctly predicted
    relative comparisons in the native space of each base kernel, and then in the space produced by KPOE ($\K_1$ with a single base kernel).  \subref{tab:taxonomy:mkl} The unweighted combination
    of kernels significantly degrades performance, both in the native space, and the learned space ($\K_1$).  MKPOE ($\K_4$) correctly rejects the random
    kernels, and significantly outperforms the unweighted combination and the single best kernel.  \label{tab:taxonomy}}
\end{table}

\subsection{Musical artist similarity}
To test our framework on a real data set, we applied the MKPOE algorithm to the task of learning a similarity function between musical artists.  The 
artist similarity problem is motivated by several real-world applications, including recommendation and playlist-generation for online radio.  Because
artists may be represented by a wide variety of different features (\eg, tags, acoustic features, social data), such applications can benefit greatly from
an optimally integrated similarity metric.

The training data is derived from the \emph{aset400} corpus of~\cite{ellis02}, which consists of 412 popular musicians, and 16385 relative
comparisons of the form $(i,j,i,k)$.  Relative comparisons were acquired from human test subjects through a web survey; subjects were presented with a query
artist ($i$), and asked to choose what they believe to be the most similar artist ($j$) from a list of 10 candidates.  From each single response, 9 relative
comparisons are synthesized, indicating that $j$ is more similar to $i$ than the remaining 9 artists ($k$) which were not chosen.

Our experiments here replicate and extend previous work on this data set~\citep{mcfee09_hesas}.  In the remainder of this section, we will first give an overview of the various types of features used to characterize each artist in Section~\ref{sec:exp:aset:feature}.  We will then discuss the
experimental procedure in more detail in Section~\ref{sec:exp:aset:experiment}.  The MKL embedding results are presented in Section~\ref{sec:exp:aset:embedding}, and are followed by an experiment detailing
the efficacy of our constraint graph processing approach in Section~\ref{sec:exp:aset:graph}.

\subsubsection{Features}
\label{sec:exp:aset:feature}
We construct five base kernels over the data, incorporating acoustic, semantic, and social views of the artists.  

\begin{itemize}
    \item \textbf{MFCC}: for each artist, we collected between 1 and 10 songs (mean 4).  For each song, we extracted a short clip consisting of 10000 half-overlapping 23ms windows. For each window, we 
    computed the first 13 Mel Frequency Cepstral Coefficients (MFCCs)~\citep{davis90}, as well as their first and second instantaneous derivatives.
This results in a sequence of 39-dimensional vectors (delta-MFCCs) for each song.  Each artist $i$ was then summarized by a Gaussian mixture model (GMM) $p_i$ over delta-MFCCs
    extracted from the corresponding songs.  Each GMM has 8 components and diagonal covariance matrices.  Finally, the kernel between artists $i$ and $j$ 
    is the probability product kernel~\citep{ppk} between their corresponding delta-MFCC distributions $p_i,p_j$:
    \[
        K^{\text{mfcc}}_{ij} = \int \sqrt{p_i(x) p_j(x)}\,dx.
    \]

    \item \textbf{Auto-tags}: Using the MFCC features described above, we applied the automatic tagging algorithm of~\cite{Turnbull_SemanticAudio},
    which for each song yields a multinomial distribution over a set $T$ of 149 musically-relevant tag words (\emph{auto-tags}).  Artist-level 
    tag distributions $q_i$ were formed by averaging model parameters (\ie, tag probabilities) across all of the songs of artist $i$.  The kernel between artists 
    $i$ and $j$ for auto-tags is a radial basis function applied to the $\chi^2$-distance between the multinomial distributions $q_i$ and $q_j$:
    \[
        K^{\text{at}}_{ij} = \exp\left(-\sigma \sum_{t\in T} \frac{\left(q_i(t) - q_j(t)\right)^2}{q_i(t) + q_j(t)}\right).
    \]
    In these experiments, we fixed $\sigma=256$.

    \item \textbf{Social tags}: For each artist, we collected the top 100 most frequently used tag words from Last.fm,\footnote{http://last.fm} a 
    social music website which allows users to label songs or artists with arbitrary tag words or \emph{social tags}.  After stemming and stop-word removal, this
    results in a vocabulary of 7737 tag words.  Each artist is then represented by a bag-of-words vector in $\R^{7737}$, and processed by TF-IDF.  The 
    kernel between artists for social tags is the cosine similarity (linear kernel) between TF-IDF vectors.
    
    \item \textbf{Biography}: Last.fm also provides textual descriptions of artists in the form of user-contributed biographies.  We collected biographies
    for each artist in the \emph{aset400} data set, and after stemming and stop-word removal, we arrived at a vocabulary of 16753 biography words.  As with social tags, the
    kernel between artists is the cosine similarity between TF-IDF bag-of-words vectors.
    
    \item \textbf{Collaborative filtering}: \cite{Celma:Thesis2008} collected collaborative filtering data from Last.fm in the form of a bipartite graph 
    over users and artists, where each user is associated with the artists in her listening history.  We filtered this data down to include only the aset400 
    artists, of which all but 5 were found in the collaborative filtering graph.  The resulting graph has 336527 users and 407 artists, and is equivalently 
    represented by a binary matrix where each row $i$ corresponds to an artist, and each column $j$ corresponds to a user.  The $ij$ entry of this matrix is 1 if we
    observe a user-artist association, and 0 otherwise.
    The kernel between artists in this view 
    is the cosine of the angle between corresponding rows in the matrix, which can be interpreted as counting the amount of overlap between the sets of users listening to each artist
    and normalizing for overall artist popularity.  For the 5 artists not found in the graph, we fill in the corresponding rows and columns of the kernel
    matrix with the identity matrix.

\end{itemize}

\subsubsection{Experimental procedure}
\label{sec:exp:aset:experiment}






The data set was split into 330 training and 82 test artists.  
Given the inherent ambiguity in the task and the format of the survey, there is a great deal of conflicting information in the survey responses.  
To obtain a more accurate and internally coherent set of training comparisons, directly contradictory comparisons (\eg, $(i,j,i,k)$ and $(i,k,i,j)$) 
are removed from the training set, reducing the set from 7915 to 6583 relative comparisons.  
The training set is further cleaned by finding an acyclic subset of comparisons and taking its transitive reduction, resulting in a minimal partial
order with 4401 comparisons.

To evaluate the performance of an embedding learned from the training data, we apply it to the test data, and then
measure accuracy by counting the fraction of similarity measurements $(i,j,i,k)$ correctly predicted by distance in the embedding space, where $i$ belongs to the test set, and $j$ and $k$ belong to the training set.
This setup can be viewed as simulating a query (by-example) $i$ and ranking the responses $j,k$ from the training set.
To gain a more accurate view of the quality of the embedding, the test set was also pruned to remove directly contradictory measurements.  This
reduces the test set from 2095 to 1753 comparisons.  No further processing is applied to test measurements, and we note that the test set is not internally 
consistent, so perfect accuracy is not achievable.

For each experiment, the optimal $\beta$ is chosen from $\{10^{-2}, 10^{-1}, \dots, 10^7\}$ by 10-fold cross-validation, \ie, 
repeating the test procedure above on splits within the training set.  Once $\beta$ is chosen, an embedding is learned 
with the entire training set, and then evaluated on the test set.

\subsubsection{Embedding results}
\label{sec:exp:aset:embedding}

For each base kernel, we evaluate the test-set performance in the native space (\ie, by distances calculated directly from the entries of the kernel
matrix), and by learned metrics, both diagonal and full (optimizing over $\K_1$ with a single base kernel).  Table~\ref{tab:result:single} lists the results. In all cases, we observe significant improvements in accuracy over the native space.  In all but one case, full-matrix embeddings significantly 
outperform diagonally-constrained embeddings.
\begin{table}
    \centering
    \begin{tabular}{lrrr}
        \multirow{2}{*}{Kernel}                      & \multicolumn{3}{c}{Accuracy}\\
        \cline{2-4}
                                        & Native    & $\K_1$ (diagonal) & $\K_1$ (full)\\
        \hline
        MFCC                            & 0.464     & 0.593 & 0.590\\
        Auto-tags (AT)                  & 0.559     & 0.568 & 0.594\\
        Social tags (ST)                & 0.752     & 0.773 & \textbf{0.796}\\
        Biography (Bio)                 & 0.611     & 0.629 & 0.760\\
        Collaborative filter (CF)       & 0.704     & 0.655 & 0.776\\
    \end{tabular}
\caption{aset400 embedding results for each of the base kernels.  Accuracy is computed in each kernel's native feature space, as well as the space produced
by applying Algorithm~\ref{alg:kpoe} (\ie, optimizing over $\K_1$ with a single kernel) with either the diagonal or full-matrix 
formulation.\label{tab:result:single}}
\end{table}

We then repeated the experiment by examining different groupings of base kernels: acoustic (MFCC and Auto-tags), semantic (Social tags and Bio), social
(Collaborative filter), and combinations of the groups.  The different sets of kernels were combined by Algorithm~\ref{alg:mkpoe} (optimizing over
$\K_4$).  The results are listed in Table~\ref{tab:result:mkl}.  For comparison purposes, we also include the unweighted sum of all base kernels (listed in
the \emph{Native} column).

In all cases, MKPOE improves over the unweighted combination of base kernels.  Moreover, many combinations outperform 
the single best kernel (ST), and the algorithm is generally robust in the presence of poorly-performing distractor kernels (MFCC and AT).  Note that the
poor performance of MFCC and AT kernels may be expected, as they derive from song-level rather than artist-level features, whereas ST provides high-level semantic descriptions which
are generally more homogeneous across the songs of an artist, and Bio and CF are directly constructed at the artist level.  For comparison purposes, we also trained a metric over all kernels with 
$\K_1$ (Algorithm~\ref{alg:kpoe}), and achieve 0.711 (diagonal) and 0.764 (full): significantly worse than the $\K_4$ results.

Figure~\ref{fig:diagmkl} illustrates the weights learned by Algorithm~\ref{alg:mkpoe} using all five kernels and diagonally-constrained $W^p$ matrices.  Note
that the learned metrics are both sparse (many 0 weights) and non-uniform across different kernels.  In particular, the (lowest-performing) MFCC kernel is 
eliminated by the algorithm, and the majority of the weight is assigned to the (highest-performing) social tag (ST) kernel.

A t-SNE~\citep{tsne} visualization of the space produced by MKPOE is illustrated in Figure~\ref{fig:embedding}.  The embedding captures a great deal of 
high-level genre structure in low dimensions: for example, the \emph{classic rock} and \emph{metal} genres lie at the opposite end of the space from 
\emph{pop} and \emph{hip-hop}.


\begin{table}
    \centering
    \begin{tabular}{lrrr}
        \multirow{2}{*}{Base kernels}                & \multicolumn{3}{c}{Accuracy}\\
        \cline{2-4}
                                    & Native    & $\K_4$ (diagonal)     & $\K_4$(full)\\
        \hline
        MFCC + AT                   & 0.521     & 0.589                 & 0.602\\
        ST + Bio                    & 0.760     & 0.786                 & 0.811\\
        MFCC + AT + CF              & 0.580     & 0.671                 & 0.719\\
        ST + Bio + CF               & 0.777     & 0.782                 & 0.806\\
        MFCC + AT + ST + Bio        & 0.709     & 0.788                 & 0.801\\
        All                         & 0.732     & 0.779                 & 0.801\\
        \\
    \end{tabular}
\caption{aset400 embedding results with multiple kernel learning: the learned metrics are optimized over 
$\K_4$ by Algorithm~\ref{alg:mkpoe}.  \emph{Native} corresponds to distances calculated according to the unweighted sum of base kernels.  
\label{tab:result:mkl}}
\end{table}


\begin{figure}
\begin{center}
\includegraphics[width=\textwidth]{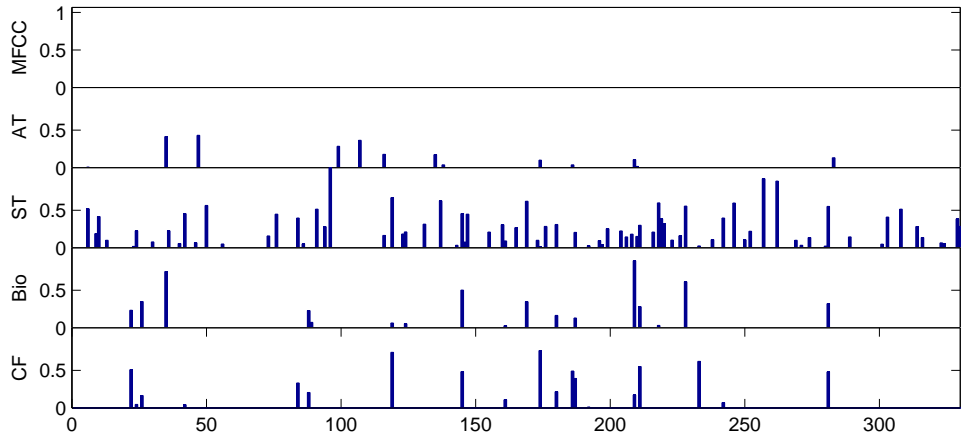}%
\end{center}
\caption{The weighting learned by Algorithm~\ref{alg:mkpoe} using all five kernels and diagonal $W^p$.  
Each bar plot contains the diagonal of the corresponding kernel's learned metric.
The horizontal axis corresponds the index of the
training set, and the vertical axis corresponds to the learned weight in each kernel space.  
\label{fig:diagmkl}}
\end{figure}

\begin{figure}
\centering
\subfigure[]{\includegraphics[height=0.4\textheight]{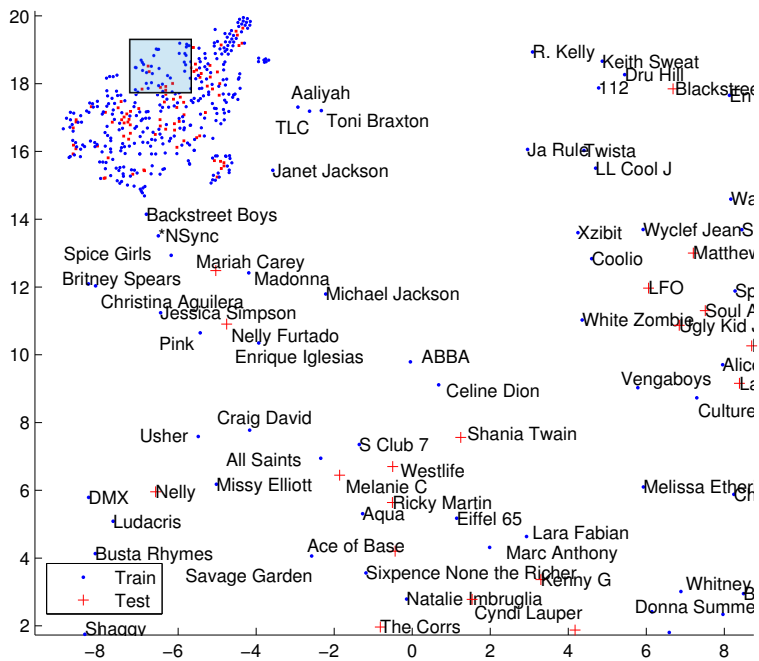}\label{fig:embedding:pop}}
\subfigure[]{\includegraphics[width=0.475\textheight]{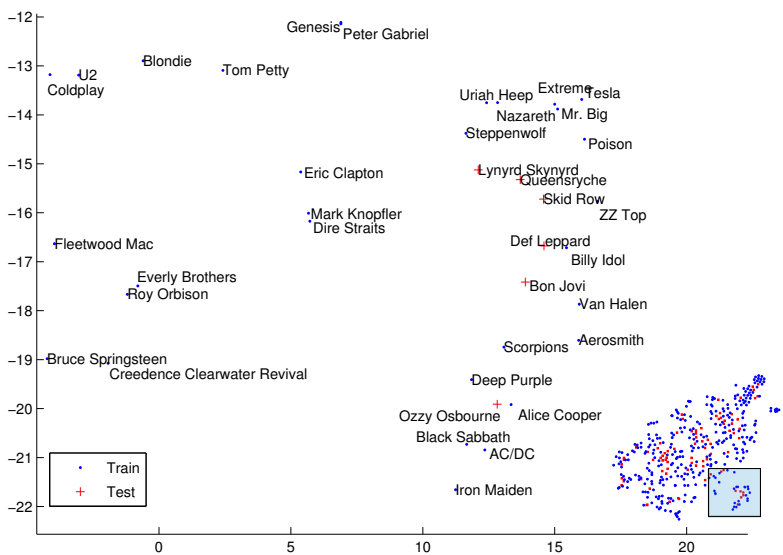}\label{fig:embedding:rock}}
\caption{t-SNE visualizations of an embedding of aset400 produced by MKPOE.  The embedding is constructed by optimizing over $\K_4$ with all five base kernels.  The two clusters shown roughly corrsepond to \subref{fig:embedding:pop} pop/hip-hop, and \subref{fig:embedding:rock} classic rock/metal genres.  Out-of-sample points are indicated by a red +.\label{fig:embedding}}
\end{figure}

\subsubsection{Graph processing results}
\label{sec:exp:aset:graph}



To evaluate the effects of processing the constraint set for consistency and redundancy, we repeat the experiment of the previous section with different levels of processing applied to $\C$.  Here, we focus on the Biography kernel, since it exhibits the largest gap in performance between the native and 
learned spaces.

As a baseline, we first consider the full set of similarity measurements as provided by human judgements, including all inconsistencies.  In the 80-20 split, there are 7915 total
training measurements.
To first deal with what appear to be the most eggregious inconsistencies, we prune all directly inconsistent training measurements; \ie, whenever $(i,j,i,k)$ and $(i,k,i,j)$ both appear, both are removed.\footnote{A more sophisticated approach could be used here, \eg, majority voting, provided there is sufficient over-sampling of comparisons in the data.}
This variation results in 6583 training measurements, and while they are not wholly consistent, the worst violators have been pruned.
Finally, we consider the fully processed case by finding a maximal consistent subset (partial order) of $\C$ and removing all redundancies, resulting in a partial order with 4401 measurements.  

Using each of these variants of the training set, we test the embedding algorithm with both diagonal and full-matrix formulations.  The results are
presented in Table~\ref{tab:graphprocessing}.  Each level of graph processing results in a small improvement in the accuracy of the learned space, and provides 
substantial reductions in computational overhead at each step of the optimization procedure for Algorithm~\ref{alg:kpoe}.

\begin{table}
\centering
\begin{tabular}{lrr}
            & \multicolumn{2}{c}{Accuracy}\\
\cline{2-3}
$\C$        & Diagonal  &   Full\\
\hline
Full        &   0.604   &   0.754\\
Length-2    &   0.621   &   0.756\\
Processed   &   0.629   &   0.760\\
\end{tabular}
\caption{aset400 embedding results (Biography kernel) for three possible refinements of the constraint set.  \emph{Full} includes all similarity measurements, with no pruning for 
consistency or redundancy.  \emph{Length-2} removes all length-2 cycles (\ie, $(i,j,k,\ell)$ and $(k,\ell,i,j)$).  \emph{Processed} finds an approximate
maximal consistent subset, and removes redundant constraints.\label{tab:graphprocessing}}
\end{table}

\section{Hardness of dimensionality reduction}
\label{sec:nphard}
The algorithms given in Sections~\ref{sec:parametric} and~\ref{sec:mkl} attempt to produce low-dimensional solutions by regularizing $W$, which can be seen
as a convex approximation to the rank of the embedding.  
In general, because rank constraints are not convex, convex optimization techniques cannot efficiently minimize dimensionality.  This does not necessarily
imply other techniques could not work.
So, it is natural to ask if exact solutions of minimal dimensionality 
can be found efficiently, particularly in the multidimensional scaling scenario, \ie, when $K=I$ (Section~\ref{sec:parametric:gnmds}).

As a special case, one may wonder if any 
instance $(\X,\C)$ can be satisfied in $\R^1$.  As Figure~\ref{fig:square} demonstrates, not all instances can be realized in one dimension.
Even more, we show that it is NP-Complete to decide if a given $\C$ can be satisfied in $\R^1$.  Given an embedding,
it can be verified in polynomial time whether $\C$ is satisfied or not by simply computing the distances between all pairs and checking 
each comparison in $\C$, so the decision problem is in NP.
It remains to show that the $\R^1$ partial order embedding problem (hereafter referred to as \emph{1-POE\/}) is NP-Hard.  We reduce from the \emph{Betweenness} problem~\citep{opatrny79}, which is known to be NP-complete.

\begin{defn}[Betweenness]
Given a finite set $Z$ and a collection $T$ of ordered triples $(a,b,c)$ of distinct elements from $Z$, is there a one-to-one function
$f:Z\rightarrow\R$ such that for each $(a,b,c)\in~T$, either $f(a)<f(b)<f(c)$ or $f(c)<f(b)<f(a)$?
\end{defn}

\begin{theorem} 1-POE is NP-Hard.\label{thm:nphard} \end{theorem}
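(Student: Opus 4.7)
The plan is to reduce Betweenness to 1-POE. Given an instance $(Z,T)$ of Betweenness, I take $\X \doteq Z$ and, for each triple $(a,b,c) \in T$, add two relative comparisons to $\C$:
\begin{equation*}
(a,b,a,c) \quad \text{and} \quad (b,c,a,c),
\end{equation*}
encoding $\|g(a)-g(b)\| < \|g(a)-g(c)\|$ and $\|g(b)-g(c)\| < \|g(a)-g(c)\|$. The construction is clearly polynomial, so it remains to show both directions of correctness.

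For the forward direction, given a Betweenness witness $f : Z \to \R$, I set $g \doteq f$. For every $(a,b,c) \in T$ the image $f(b)$ lies strictly between $f(a)$ and $f(c)$, so $|f(a)-f(b)|$ and $|f(b)-f(c)|$ are both strictly smaller than $|f(a)-f(c)|$. After multiplying $g$ by a large enough positive constant, the unit-margin form of the constraints in (\ref{eq:constraints}) is also satisfied, and so $g$ witnesses the 1-POE instance.

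For the converse, suppose $g : \X \to \R$ satisfies $\C$. Fix a triple $(a,b,c)$ and assume without loss of generality $g(a) \le g(c)$, so $|g(a)-g(c)| = g(c)-g(a)$. If $g(b) < g(a)$, then $|g(b)-g(c)| = g(c)-g(b) > g(c)-g(a)$, contradicting the second added comparison; symmetrically, $g(b) > g(c)$ contradicts the first. Hence $g(a) \le g(b) \le g(c)$. The two inequalities are strict because $|g(a)-g(b)|<|g(a)-g(c)|$ rules out $g(b)=g(c)$ and $|g(b)-g(c)|<|g(a)-g(c)|$ rules out $g(b)=g(a)$. In particular, the values of $g$ at the three elements of each triple are pairwise distinct.

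The only remaining issue is that $g$ may fail to be injective across different triples (Betweenness requires a one-to-one $f$). Because $\C$ is finite and every constraint in it is a strict inequality, there is an open neighborhood of $g$ in $\R^{|\X|}$ on which every constraint continues to hold; a generic perturbation inside this neighborhood is injective, and the triple-wise strict orderings derived above are preserved. The perturbed map is therefore a valid Betweenness solution. Combined with NP membership (check all $|\C|$ distance comparisons in polynomial time), this establishes NP-completeness, and in particular NP-hardness of 1-POE. The main subtlety I expect to handle carefully is precisely this distinctness/perturbation step, since without it a degenerate 1-POE solution that collapses points could otherwise fail to certify Betweenness.
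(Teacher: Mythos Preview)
Your proposal is correct and follows the same reduction from Betweenness as the paper, with the identical gadget $(a,b,a,c),(b,c,a,c)$ for each triple. You actually fill in two details the paper's proof leaves implicit: the scaling step to absorb the unit margin of~(\ref{eq:constraints}), and the perturbation argument to recover an injective $f$ as required by the Betweenness definition; both are handled soundly.
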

\begin{proof}
Let $(Z,T)$ be an instance of Betweenness.  Let $\X = Z$, and for each $(a,b,c)\in~T$, introduce constraints $(a,b,a,c)$ and $(b,c,a,c)$ to $\C$.  Since Euclidean distance in $\R^1$ is simply line distance, these constraints force $g(b)$ to lie between $g(a)$ and $g(c)$.  Therefore, the original 
instance $(Z,T) \in \text{Betweenness}$ if and only if the new instance $(\X,\C) \in \text{1-POE}$.  Since Betweenness is NP-Hard, 1-POE is NP-Hard as well.
\end{proof}

\begin{figure}
\begin{center}
\subfigure[]{\includegraphics[width=0.25\textwidth]{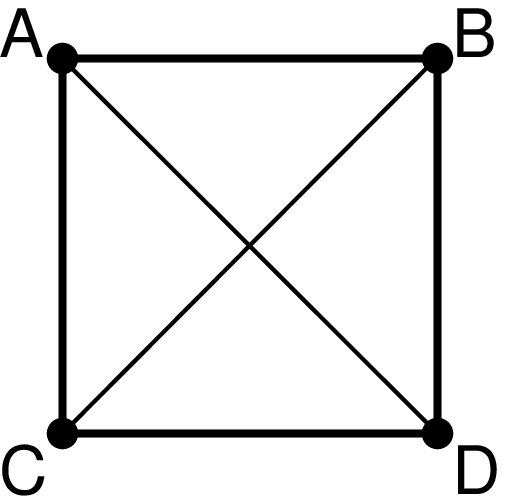}\label{fig:square:left}}%
\hspace{0.25\textwidth}\subfigure[]{\includegraphics[width=0.25\textwidth]{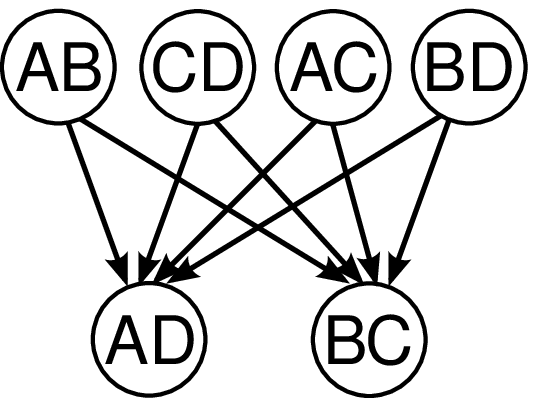}\label{fig:square:right}}%
\end{center}
\caption{\subref{fig:square:left} The vertices of a square in $\R^2$.  \subref{fig:square:right} The partial order over distances induced by the 
square: each side is less than each diagonal.  This constraint set cannot be satisfied in $\R^1$.\label{fig:square}}
\end{figure}

Since 1-POE can be reduced to the general optimization problem of finding an embedding of minimal dimensionality, we can conclude that dimensionality
reduction subject to partial order constraints is also NP-Hard.

\section{Conclusion}
\label{sec:conclusion}
We have demonstrated a novel method for optimally integrating heterogeneous data to conform to measurements of perceptual similarity.  By interpreting a 
collection of relative similarity comparisons as a directed graph over pairs, we are able to apply graph-theoretic techniques to isolate and prune 
inconsistencies in the training set 
and reduce computational overhead by eliminating redundant constraints in the optimization procedure.

Our multiple-kernel formulation offers a principled way to integrate multiple feature modalities into a unified similarity space.  Our formulation carries
the intuitive geometric interpretation of concatenated projections, and results in a semidefinite program.  By incorporating diagonal
constraints as well, we are able to reduce the computational complexity of the algorithm, and learn a model which is both flexible --- only using kernels in the portions of the space where they are informative ---
and interpretable --- each diagonal weight corresponds to the contribution to the optimized space due to a single point within a single feature space.
Table~\ref{tab:kernelcombination} provides a unified perspective of multiple kernel learning formulations for embedding problems, but it is clearly not complete.  It will
be the subject of future work to explore and compare alternative generalizations and restrictions of the formulations presented here.

\appendix
\section{Embeddability of partial orders}
\label{sec:diameter}
\label{sec:diameterproof}
In this appendix, we prove that any set $\X$ with a partial order over distances $\C$ can be embedded into $\R^{n}$ while satisfying all distance comparisons.

In the special case where $\C$ is a total ordering over all pairs (\ie,\ a chain graph), the problem reduces to non-metric multidimensional scaling~\citep{kruskal64}, and a constraint-satisfying embedding can always be found by the constant-shift embedding algorithm of~\cite{roth03}.  In general, $\C$ is not a total order, but a $\C$-respecting embedding can always be produced by reducing the partial order to a (weak) total order by topologically sorting the graph 
(see Algorithm~\ref{alg:totalorder}).

\begin{algorithm}[tb]
\caption{Na{\" \i}ve total order construction}
\label{alg:totalorder}
\begin{algorithmic}
    \STATE {\bfseries Input:} objects $\X$, partial order $\C$
    \STATE {\bfseries Output:} symmetric dissimilarity matrix $\Delta \in \R^{n\times n}$
    \STATE
    \FOR{each $i$ in $1\dots n$}
        \STATE $\Delta_{ii} \leftarrow 0$
    \ENDFOR
    \FOR{each $(k,\ell)$ in topological order}
        \IF{in-degree$(k,\ell) = 0$}
            \STATE $\Delta_{k\ell},\Delta_{\ell k} \leftarrow 1$
        \ELSE
            \STATE $\Delta_{k\ell},\Delta_{\ell k} \leftarrow \displaystyle\max_{(i,j,k,\ell) \in \C} \Delta_{ij} + 1$
        \ENDIF
    \ENDFOR
\end{algorithmic}
\end{algorithm}


Let $\Delta$ be the dissimilarity matrix produced by Algorithm~\ref{alg:totalorder} on an instance $(\X,\C)$.  An embedding can be found by first applying 
classical multidimensional scaling (MDS)~\citep{cox94} to $\Delta$:
\begin{equation}
A = -\frac{1}{2} H\Delta H,\label{eq:mds}
\end{equation}
where ${H = I - \frac{1}{n} \one \one\trans}$ is the ${n\times n}$ centering matrix, and $\one$ is a vector of 1s.  Shifting the spectrum of $A$ yields
\begin{equation}
A - \lambda_n(A)I = \widehat{A}~\succeq~0,\label{eq:shift}
\end{equation}
where $\lambda_n(A)$ is the minimum eigenvalue of $A$.  The embedding $g$ can be found by decomposing $\widehat{A} = V\widehat{\Lambda} V\trans$, so that 
$g(x_i)$ is the $i\th$ column of ${{}\widehat{\Lambda}}^{1/2}V\trans$; this is the solution constructed by the 
constant-shift embedding non-metric MDS algorithm of~\cite{roth03}.

Applying this transformation to $A$ affects distances by
\begin{align*}
\|g(x_i) - g(x_j)\|^2 = \widehat{A}_{ii} + \widehat{A}_{jj} - 2\widehat{A}_{ij} &= (A_{ii} - \lambda_n) + (A_{jj} - \lambda_n) - 2A_{ij}\\
&= A_{ii} + A_{jj} - 2A_{ij} - 2\lambda_n.
\end{align*}
Since adding a constant ($-2\lambda_n$) preserves the ordering of distances, the total order (and hence $\C$) is preserved by this transformation.  Thus,
for any instance $(\X,\C)$, an embedding can be found in $\R^{n-1}$.

\section{Solver}
\label{sec:gradient}
Our implementation of Algorithm~\ref{alg:mkpoe} is based on a simple projected (sub)gradient descent.  To simplify exposition, we show the derivation 
of the single-kernel SDP version of the algorithm (Algorithm \ref{alg:kpoe}) with unit margins.  (It is straightforward to extend the derivation to the multiple-kernel and LP settings.)

We first observe that a kernel matrix column $K_i$ can be expressed as $K\trans e_i$ where $e_i$ is the $i\th$ standard basis vector.  We can then 
denote the distance calculations in terms of Frobenius inner products:
\begin{align*}
d(x_i,x_j) &= (K_i - K_j)\trans W (K_i - K_j)\\
&= (e_i - e_j)\trans K W K (e_i - e_j)\\
&= \trace(KWK(e_i - e_j)(e_i - e_j)\trans) = \trace(WKE_{ij}K)\\
&= \left\langle W, KE_{ij}K\right\rangle_F,
\end{align*}
where $E_{ij} = (e_i - e_j)(e_i - e_j)\trans$.

A margin constraint $(i,j,k,\ell)$ can now be expressed as:
\begin{align*}
&           & d(x_i,x_j) + 1 &\leq d(x_k,x_\ell) + \xi_{ijk\ell}\\
&\Rightarrow& \left\langle W, KE_{ij}K\right\rangle_F + 1 &\leq \left\langle W, KE_{k\ell}K\right\rangle_F + \xi_{ijk\ell}\\
&\Rightarrow& \xi_{ijk\ell} &\geq 1 + \left\langle W, K(E_{ij} - E_{k\ell})K\right\rangle_F.
\end{align*}

The slack variables $\xi_{ijk\ell}$ can be eliminated from the program by rewriting the objective in terms of the hinge loss $h(\cdot)$ over the constraints:
\[
\min_{W\succeq0} f(W) \text{ where } f(W) = \trace(WK) + \frac{\beta}{|\C|}\sum_\C h\left( 1 + \left\langle W, K(E_{ij} - E_{k\ell})K\right\rangle_F\right).
\]
The gradient $\nabla f$ has two components: one due to regularization, and one due to the hinge loss.  The gradient due to
regularization is simply $K$.  The loss term decomposes linearly, and for each $(i,j,k,\ell)\in\C$, a subgradient direction can be defined:
\begin{equation}
\frac{\partial}{\partial W} h\left( 1 + d(x_i,x_j) - d(x_k,x_\ell)\right) = \begin{cases}
0 & d(x_i,x_j) +1 \leq d(x_k,x_\ell)\\
K(E_{ij} - E_{k\ell})K & \text{otherwise}.
\end{cases}\label{eq:partial}
\end{equation}
Rather than computing each gradient direction independently, we observe that each violated constraint contributes a matrix of the form 
    $K(E_{ij} - E_{k\ell})K$.  By linearity, we can collect all $(E_{ij}-E_{k\ell})$ terms and then pre- and post-multiply by $K$ to obtain a more efficient
calculation of $\nabla f$:
\begin{align*}
\frac{\partial}{\partial W}f &= K + K \left(\sum_{(i,j,k,\ell)\in\overline{\C}} E_{ij}-E_{k\ell} \right) K,
\end{align*}
where $\overline{\C}$ is the set of all currently violated constraints.

After each gradient step $W\mapsto W - \alpha \nabla f$, the updated $W$ is projected back onto the set of positive semidefinite matrices by computing its
spectral decomposition and thresholding the eigenvalues by $\lambda_i\mapsto\max(0,\lambda_i)$.

To extend this derivation to the multiple-kernel case (Algorithm~\ref{alg:mkpoe}), we can define 
\[
d(x_i,x_j) \doteq \sum_{p=1}^m d^p(x_i,x_j),
\]
and exploit linearity to compute each partial derivative $\partial/\partial W^p$ independently.

For the diagonally-constrained case, it suffices to substitute
\[
K(E_{ij}-E_{k\ell})K \quad \mapsto \quad \diag(K(E_{ij}-E_{k\ell})K)
\]
in Equation~\ref{eq:partial}.  After each gradient step in the diagonal case, the PSD constraint on $W$ can be enforced by the projection $W_{ii}\mapsto \max(0,W_{ii})$.

\section{Relationship to AUC}
\label{sec:gauc}
In this appendix, we formalize the connection between partial orders over distances and query-by-example ranking.  Recall that Algorithm~\ref{alg:lpoe}
minimizes the loss $\nicefrac{1}{|\C|}\sum_\C \xi_{ijk\ell}$, where each $\xi_{ijk\ell}\geq0$ is a slack variable associated with a margin constraint
\[
d(i,j) + 1 \leq d(k,\ell) + \xi_{ijk\ell}.
\]

As noted by~\cite{schultz04}, the fraction of relative comparisons satisfied by an embedding $g$ is closely related to the area under the receiver operating characteristic 
curve (AUC).  To make this connection precise, consider the following information retrieval problem.  For each point $x_i \in \X$, we are given a partition of 
$\X\setminus\{i\}$:
\begin{align*}
\X_i^+ &= \{ x_j ~:~ x_j\in\X \text{ relevant for } x_i\},\text{ and }\\
\X_i^- &= \{ x_k ~:~ x_k\in\X \text { irrelevant for } x_i\}.
\end{align*}
If we embed each $x_i\in\X$ into a Euclidean space, we can then rank the rest of the data $\X\setminus\{x_i\}$ by increasing distance from $x_i$.  Truncating this
ranked list at the top $\tau$ elements (\ie, closest $\tau$ points to $x_i$) will return a certain fraction of relevant points (true positives), and irrelevant
points (false positives).
Averaging over all values of $\tau$ defines the familiar AUC score, which can be compactly expressed
as:
\[
\text{AUC}(x_i|g) = 
\frac{1}{|\X_i^+|\cdot|\X_i^-|} \sum_{(x_j,x_k) \in \X_i^+\times\X_i^-} \I\left[\|g(x_i) - g(x_j)\| < \|g(x_i) - g(x_k)\|\right].
\]

Intuitively, AUC can be interpreted as an average over all pairs $(x_j,x_k) \in \X_i^+\times\X_i^-$ of the number of times a $x_i$ was mapped closer to a 
relevant point $x_j$ than an irrelevant point $x_k$.
This in turn can be conveniently expressed by a set of relative comparisons for each $x_i\in\X$: 
\[
    \forall (x_j,x_k) \in \X_i^+\times\X_i^-~:~ (i,j,i,k).
\]
An embedding which satisfies a complete set of constraints of this form will receive an AUC score of 1, since every relevant point must be closer to $x_i$
than every irrelevant point.

Now, returning to the more general setting, we do not assume binary relevance scores or complete observations of relevance for all pairs of points.
However, we can define the generalized AUC score (GAUC) as simply the average number of correctly ordered pairs (equivalently, satisfied constraints) given a set of relative
comparisons:
\begin{equation}
\text{GAUC}(g) = \frac{1}{|\C|} \sum_{(i,j,k,\ell)\in\C} \I\left[\|g(x_i)-g(x_j)\| < \|g(x_k)-g(x_\ell)\|\right].\label{eq:gauc}
\end{equation}
Like AUC, GAUC is bounded between 0 and 1, and the two scores coincide exactly in the previously described ranking problem.  A corresponding loss function can be defined by reversing the order of the
inequality, \ie,
\[
L_\text{GAUC}(g) = \frac{1}{|\C|} \sum_{(i,j,k,\ell)\in\C} \I\left[\|g(x_i)-g(x_j)\| \geq \|g(x_k)-g(x_\ell)\|\right].
\]
Note that $L_\text{GAUC}$ takes the form of a sum over indicators, and can be interpreted as the average 0/1-loss over $\C$.  This function is clearly not convex in $g$, and is therefore difficult to 
optimize.  Algorithms~\ref{alg:lpoe},~\ref{alg:kpoe} and \ref{alg:mkpoe} instead optimize a convex upper bound on $L_\text{GAUC}$ by replacing indicators by the hinge loss:
\[
h(x) = \begin{cases}
0 & x \leq 0\\
x & x > 0
\end{cases}.
\]
As in SVM, this is accomplished by introducing a unit margin and slack variable $\xi_{ijk\ell}$ for each $(i,j,k,\ell)\in\C$, and minimizing $\nicefrac{1}{|\C|}\sum_\C\xi_{ijk\ell}$.

\bibliography{refs}
\end{document}